\documentclass{article}
\usepackage{iclr2027_conference,times}

\usepackage{amsmath,amsfonts,bm}

\def\eqref#1{equation~\ref{#1}}

\def\1{\bm{1}}

\DeclareMathAlphabet{\mathsfit}{\encodingdefault}{\sfdefault}{m}{sl}
\SetMathAlphabet{\mathsfit}{bold}{\encodingdefault}{\sfdefault}{bx}{n}

\usepackage{hyperref}
\usepackage{url}
\usepackage{algorithm}
\usepackage{algpseudocode}
\usepackage{wrapfig}
\usepackage{booktabs}
\usepackage{multirow}
\usepackage{amsfonts}
\usepackage{amsmath}
\usepackage{amssymb}
\usepackage{graphicx}
\usepackage{microtype}
\usepackage{xcolor}
\usepackage{amsthm}
\usepackage{caption}

\usepackage{amsthm}

\newtheorem{theorem}{Theorem}

\title{Federated Graph Retrieval-Augmented Generation}

\author{%
  \textbf{Xiaobao Wang}$^{1}$ \quad
  \textbf{Renda Han}$^{1}$ \quad
  \textbf{Zhiliang Xia}$^{1}$ \quad
  \textbf{Meng Liu}$^{2}$ \quad
  \textbf{Renxiang Guan}$^{3}$ \quad
  \textbf{Di Jin}$^{1}$\thanks{Corresponding author.} \quad\\
  $^{1}$College of Intelligence and Computing, Tianjin University, Tianjin, China \\
  $^{2}$Henan University, Zhengzhou, China \\
  $^{3}$National University of Defense Technology, Changsha, China \\
  \texttt{\{wangxiaobao,rendah,xiazhiliang,jindi\}@tju.edu.cn} \\
}

\newcommand{\method}{FedGRAG}

\iclrfinalcopy
\begin{document}

\maketitle

\begin{abstract}
Graph Retrieval-Augmented Generation (GraphRAG) leverages centralized knowledge graphs to enhance large language models with structured knowledge retrieval. However, real-world knowledge is inherently distributed across multiple organizations rather than being centrally accessible. Privacy constraints further prohibit sharing entire local knowledge graphs, preventing GraphRAG from effectively retrieving and utilizing distributed knowledge. To this end, we propose Federated GraphRAG (FedGRAG), a unified framework for distributed graph retrieval and knowledge-enhanced generation over multiple clients with knowledge graphs. Specifically, we address two fundamental challenges in federated GraphRAG. First, the absence of a global knowledge graph makes unified retrieval impossible. To address this, we develop a Federated Knowledge Routing mechanism that efficiently identifies relevant clients through lightweight knowledge indices without exposing local graph structures. Second, local graph memory cannot be shared directly because it contains sensitive entities and knowledge information. We therefore propose a controlled-disclosure evidence construction method. It extracts retrieved evidence subgraphs and converts them into an evidence abstract, enabling evidence conversion and reconstruction on the server. Extensive experiments demonstrate that Federated GraphRAG consistently outperforms existing baselines in both retrieval and downstream generation tasks, establishing a new paradigm for deploying GraphRAG in federated environments.
\end{abstract}

\section{Introduction}
Graph Retrieval-Augmented Generation (GraphRAG) \citep{peng2025graph,hu2025grag,procko2024graph} has recently emerged as a promising paradigm for improving Large Language Models (LLMs) \citep{chang2024survey} by incorporating graph-structured knowledge \citep{Liang2023knowledgegraph, liu2025adaptivefeature, liu2025deeptemporal, liang2025fromconcrete} into the retrieval process. Representative GraphRAG systems, such as Microsoft's GraphRAG \citep{han2024retrieval}, HyperGraphRAG \citep{luo2026hypergraphrag}, and LightRAG \citep{guo2024lightrag}, organize knowledge into graph-structured representations to enable graph retrieval, multi-hop reasoning, and knowledge-enhanced generation. 

Despite their success, existing GraphRAG methods fundamentally rely on a centralized graph repository. This assumes that graph-structured knowledge can be globally constructed, indexed, and retrieved from a unified location. However, in many real-world applications, graph knowledge is inherently distributed across multiple organizations and cannot be centralized due to privacy and ownership \citep{liu2025federatednodelevel,Liu2025federatedgraph,han2026federatedgraphlevel,han2025federatedgraphlevel,liu2026causallyawareattribute}. This motivates the development of RAG methods for distributed settings. Recent Federated Retrieval-Augmented Generation (FedRAG) methods, such as \citep{guerraoui2025efficient, addison2024c, zhao2024frag} alleviate this limitation by enabling collaborative retrieval over distributed knowledge repositories \cite{liao2025fedbrba, fu2023fedhgna}. Nevertheless, They are primarily designed for vector-based retrieval and cannot be directly extended to GraphRAG. Unlike vector representations, graph memories contain rich structural semantics, and effective collaborative retrieval requires not only identifying relevant knowledge sources but also extracting the semantic relationships among distributed graph memories. First, without graph-structured knowledge at local clients, conventional RAG relies mainly on isolated textual retrieval, making it difficult to capture explicit entity relations and multi-hop dependencies. As a result, complex queries often require repeated retrieval over fragmented evidence, increasing retrieval cost while limiting the construction of coherent reasoning paths.

Based on the above analysis, we need to design a federated GraphRAG method that supports evidence retrieval augmentation on graphs and collaborative reasoning across distributed clients simultaneously. Inspired by federated graph learning \citep{liu2025federatednodelevel}, which enables collaborative optimization over graphs distributed across clients, an intuitive approach is for the server to provide a guidance signal to select the correct client for queries. However, this faces the following challenges:
First, the absence of a globally accessible graph makes it difficult to identify relevant and complementary knowledge across clients, forcing queries to search distributed clients individually and introducing substantial retrieval latency and computational overhead. Second, locally retrieved subgraphs contain potentially sensitive entities, relations, and structural information, yet restricting their exchange may obscure the dependencies needed for cross-client reasoning. Evidence integration therefore entails a fundamental tension between preserving query-relevant semantics and relational context and limiting the disclosure of local knowledge.
\begin{wrapfigure}[19]{r}{0.45\textwidth}
    \centering
    \includegraphics[width=0.46\textwidth]{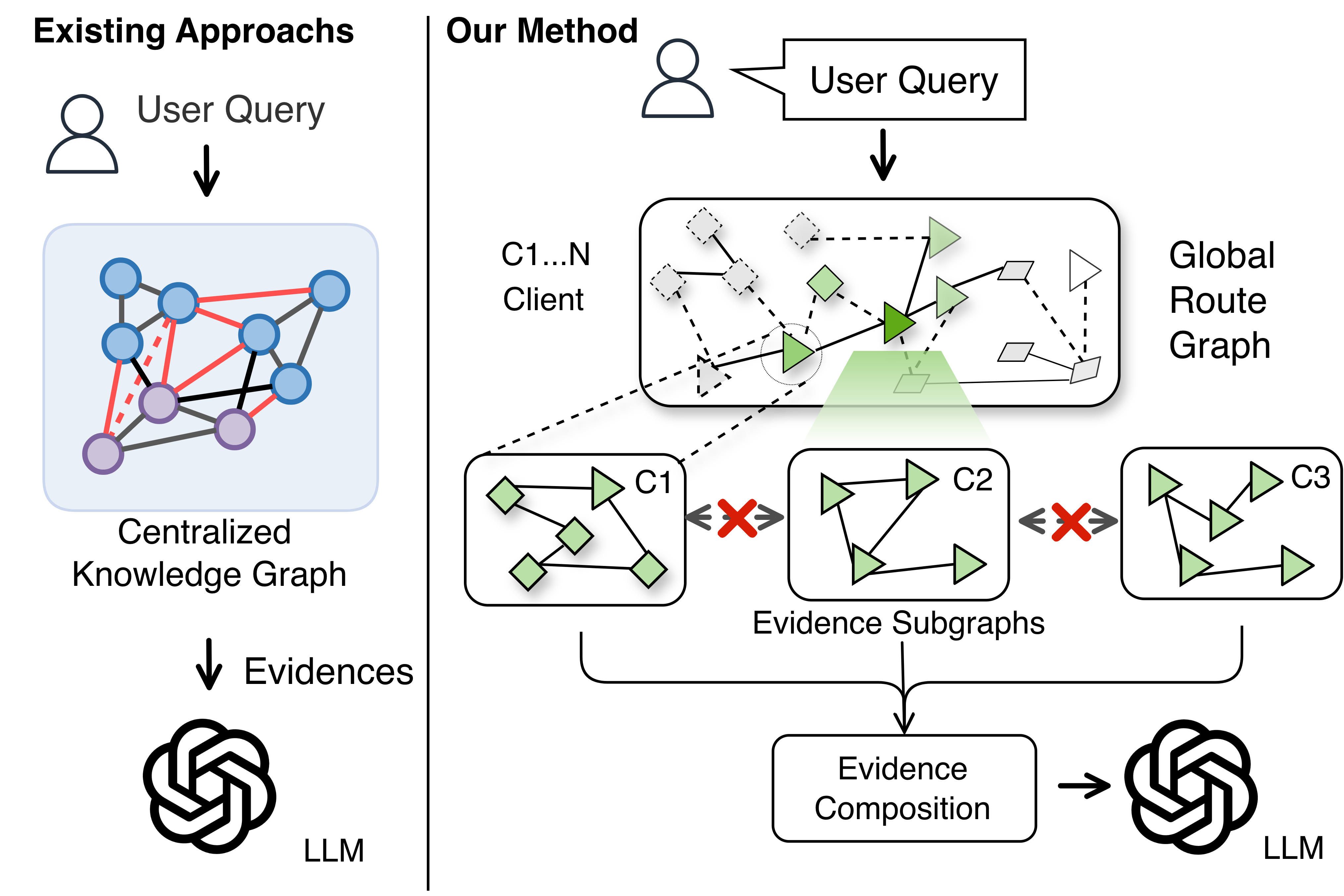}
    \caption{Motivation of Federated GraphRAG. Existing GraphRAG relies on centralized graph memories, while directly extending Federated RAG fails to capture semantic relationships among distributed graph memories.}
    \label{fig:placeholder}
\end{wrapfigure}

To solve these limitations, we propose Federated GraphRAG (\method{}), the first unified framework for distributed graph retrieval and knowledge-enhanced generation over multiple clients. Specifically, we first introduce a Federated Knowledge Routing mechanism that constructs lightweight client-level knowledge indices to efficiently route queries toward the most relevant clients without exposing local entire knowledge graphs. We further propose an evidence abstraction strategy that transforms locally retrieved subgraphs into limited exposure evidence representations, enabling collaborative knowledge utilization while preventing direct leakage of entire graph knowledge. Extensive experiments demonstrate that Federated GraphRAG consistently outperforms existing baselines on both retrieval and downstream generation tasks, establishing a practical paradigm for deploying GraphRAG in federated environments. 
In summary, our contribution can be listed as:


\begin{itemize}
    \item \textbf{New Perspective.} To the best of our knowledge, this is the first work to investigate a federated GraphRAG paradigm, extending GraphRAG from centralized knowledge repositories to distributed environments under privacy and knowledge ownership constraints.
    \item \textbf{Unified Framework.} We propose a unified Federated GraphRAG framework that enables collaborative retrieval-augmented generation over distributed graph-structured knowledge through federated knowledge routing and collaborative evidence abstraction.
    \item \textbf{Superior Performance.} Extensive experiments on multiple benchmarks demonstrate that Federated GraphRAG consistently outperforms representative GraphRAG baselines.
\end{itemize}

\section{Related Work}
\subsection{Graph Retrieval-Augmented Generation}
Retrieval-Augmented Generation (RAG) \citep{gao2023retrieval, han2024retrieval} enhances LLMs by retrieving external knowledge to improve factual accuracy and reasoning capability. Recently, GraphRAG has emerged as a promising extension of RAG by organizing retrieved knowledge into graph-structured memories. Representative methods include Microsoft's GraphRAG~\citep{edge2024local}, which constructs entity-community graphs to facilitate hierarchical retrieval and global reasoning. Similarly, LightRAG~\citep{guo2025lightrag} improves retrieval efficiency through lightweight graph indexing. Moreover, HippoRAG~\citep{gutierrez2024hipporag}  models long-range semantic associations via graph-based memory organization. 
SubgraphRAG~\citep{li2025simple} instead retrieves compact query-relevant subgraphs through lightweight triple scoring and structural-distance modeling, balancing retrieval efficiency with graph reasoning capability. More recently, HippoRAG2~\citep{gutierrez2025rag} extends graph-based memory toward non-parametric continual learning over newly acquired knowledge. HyperGraphRAG~\citep{luo2026hypergraphrag} further generalizes conventional pairwise graph relations to hypergraph structures, representing higher-order $n$-ary relationships through hyperedges for more expressive knowledge retrieval and generation. GFM-RAG~\citep{luo2026gfm} introduces a graph foundation model trained over large-scale heterogeneous knowledge graphs, enabling transferable graph-aware retrieval on unseen datasets without task-specific fine-tuning.


\subsection{Federated Retrieval-Augmented Generation}
Recent studies have extended RAG to federated settings \citep{liu2025federatednodelevel, liu2026personalizedfederated, han2025federatedgraphlevel, han2026federatedgraphlevel}, enabling LLM to leverage distributed knowledge. Federated RAG allows each client to maintain its own knowledge repository and performs collaborative retrieval under the coordination of a central server~\citep{chakraborty2025federated}.
Current FedRAG approaches can be broadly categorized into two directions. Routing-based methods aim to reduce communication overhead by selecting only a subset of relevant clients for each query. Representative examples include RAGRoute \citep{guerraoui2025efficient}, which learns lightweight routing policies for efficient federated search, and FedE4RAG \citep{mao2025privacy}, which systematically investigates the effectiveness of different client selection strategies under distributed retrieval settings. Privacy-oriented methods focus on protecting sensitive knowledge during retrieval. Representative frameworks include C-FedRAG \citep{addison2024c}, and FRAG \citep{zhao2024frag} which improve retrieval security without exposing local knowledge bases. 


\section{Methodology}
In this section, we present Federated GraphRAG (\method{}) in detail, a unified framework for retrieval and knowledge-enhanced generation on distributed knowledge graphs. The core idea is construct semantic cluster routing for different client on the server; routing queries to the clients, which identify and extract evidentiary subgraphs, and the results are then globally integrated to generate the final retrieval output. \method{} consists of three components: (1) federated knowledge routing (2) client evidence representation and (3) collaborative generation. The architecture of \method{} is shown in Fig. \ref{fig:framework}, and its pseudocode appears in \textbf{Appendix}~\ref{app:sec:algorithm}.

\subsection{Notation}
We consider a federated GraphRAG system consisting of a central server and $M$ distributed clients.
Each client owns a private document corpus $\mathcal{D}$, where the local corpora is generally non-IID. Each client utilize $\mathcal{D}$ constructs a knowledge graph $\mathcal{G}=(\mathcal{V},\mathcal{E},\mathbf{X})$, where $\mathcal{V}$, $\mathcal{E}$, and $\mathbf{X}$ denote the node set, edge set, and node attribute matrix, respectively. 


\subsection{Federated Knowledge Routing}
In this section, because knowledge in federated environments is inherently distributed across multiple clients, the server has no prior knowledge of which client contains the information required to answer a given query. Therefore, we propose a knowledge routing mechanism that organizes distributed client knowledge into a unified routing space. The key idea is to represent each client using a set of knowledge prototypes, where each prototype characterizes one representative semantic region covered by the local graph memory. These prototypes are then organized into a global routing space, enabling efficient query routing without exposing local graph memories.

\textbf{Knowledge Prototype Learning}
In this stage, our goal is to capture the representative local knowledge area for domain knowledge prototype generation. 
Specifically, for each client, we first learn the node attribute and graph structure representation, $\mathbf{H}^\psi$ and $\mathbf{H}^\phi$ by a dual encoder $\psi$ and $\phi$ as
\begin{equation}
\mathbf{H}^\psi=\operatorname{\psi}(\mathbf{X}),\quad
\mathbf{H}^\phi=\operatorname{\phi}(\mathbf{A}, \mathbf{X}).
\end{equation}
To jointly exploit semantic attributes and graph topology, we fuse the two representations into a unified node embedding, as
\begin{math}
\mathbf{H}
=
\operatorname{MLP}
(
\mathbf{H}^{\psi}
\Vert
\mathbf{H}^{\phi}
),
\end{math}
where $\Vert$ denotes feature concatenation and MLP projects the fused representation into a unified latent space.
Since local graph memories usually contain multiple semantic regions, we partition the fused node embedding space into $K$ semantic groups using $K$-means clustering,
where each cluster represents a coherent semantic region, collectively providing a compact yet expressive characterization of the client's knowledge space.
The knowledge prototype of each semantic region is then defined as the centroid of the corresponding cluster,
\begin{math}
\mathbf{p}^{k}
=
\frac{1}{|\mathcal{C}_k|}
\sum\nolimits_{\mathbf{h}_j\in\mathcal{C}_k}
\mathbf{h}_j,
\end{math}
where $|\mathcal{C}_k|$ denotes the number of nodes assigned to the $k$-th cluster. Collecting all cluster centroids yields the prototype matrix $\mathbf{P}$, as
\begin{math}
\mathbf{P}
=[\mathbf{p}^{1},\mathbf{p}^{2},\ldots,\mathbf{p}^{K}].
\end{math}
The learned knowledge prototypes explicitly preserve multiple semantic regions contained in each local graph memory. Consequently, heterogeneous client knowledge can be represented in a unified semantic space while maintaining sufficient semantic granularity for query routing. 

\textbf{Global Knowledge Routing Space}
After the local client prototypes are uploaded to the server. The server aims to utilize these prototypes to build a routing graph, which serves as a semantic index of the entire federated system.
In detail, the global routing graph is defined as $ \mathcal{R} = \left( \mathcal{V}^{r}, \mathcal{E}^{r}, \mathbf{X}^{r} \right)$ where $\mathcal{V}^{r}$ and $\mathcal{E}^{r}$ denote the set of nodes and the set of edges, respectively, and $\mathbf{X}^{r}$ denotes the matrix of features of the nodes.
Each routing node corresponds to one knowledge prototype generated by a client, as $ \mathcal{V}^{r} = \{ v_{i}^{k} \mid i=1,\ldots,M,\; k=1,\ldots,K \} $
where $v_i^k$ denotes the $k$-th knowledge prototype of client $i$, whose initial node feature is given by 
$\mathbf{x}_{i}^{k} = \mathbf{p}_{i}^{k}. $
To describe the semantic relationships among distributed knowledge prototypes, we compute the pairwise prototype similarity and connect each prototype with to its top-$L$ most relevant prototypes (KNN algorithm \cite{Bo2020structuraldeep}), as
\begin{equation} 
a_{ij}^{kl} = \operatorname{sim} \left( \mathbf{p}_{i}^{k}, \mathbf{p}_{j}^{l} \right), \quad \mathcal{E}^{r} = \left\{ \left(v_i^k,v_j^l\right) \mid v_j^l \in \operatorname{TopL} \left( a_{ij}^{kl} \right) \right\}. 
\end{equation} 
where $\operatorname{sim}(\cdot)$ denotes cosine similarity. 
The corresponding weighted adjacency matrix is defined as \begin{equation} \mathbf{A}^{r}_{(i,k),(j,l)} = \begin{cases} a_{ij}^{kl}, & \left(v_i^k,v_j^l\right)\in\mathcal{E}^{r}, \\ 0, & \text{otherwise}. \end{cases} \end{equation} 
The route graph $\mathcal{R}$ transforms isolated client prototypes into a semantic structure connected globally across distributed knowledge regions  
without accessing local private graph data. 
This routing space significantly reduces communication cost while establishing a unified semantic foundation for subsequent query routing.

\textbf{Prototype-driven Query Routing.}
For user query $q$, the server first encodes it into a query representation $\mathbf{q}$ using the same encoder as the client. An initial relevance score is computed between the query and each client prototype, as
\begin{math}
\mathbf{r}
=\operatorname{sim}(\mathbf q,\mathbf{P}_i).
\end{math}
We select the $L$ prototypes most relevant to the query as the starting nodes for relevance propagation.
Subsequently, we propagate query relevance on the routing graph over semantically correlated clients. The initial relevance scores are regarded as node features $\mathbf{R}$ on the routing graph, as $\mathbf R=[\mathbf{r}_1,\cdots,\mathbf{r}_M]^\top,$
and graph propagation is performed as
\begin{equation}
\mathbf R^{(t+1)}
=
\alpha \mathbf R^{(0)}(\theta_x)
+
(1-\alpha)\tilde{\mathbf A}^{R}(\theta_e)\mathbf R^{(t)}.
\end{equation}
where $\theta_x$ and $\theta_e$ are learnable parameters.
The propagated relevance scores capture not only the direct similarity between the query and individual clients, but also the semantic dependencies between neighboring knowledge regions. Consequently, clients that are semantically complementary to the query can also receive high routing scores through graph propagation.
After $T$ propagation steps, the routing score of client $i$ is defined by its highest-scoring prototype:
\begin{math}
s_i(q)=\max_{k=\{1,\ldots,K\}}r_{i,k}^{(T)}.
\end{math}
The server selects the $B$ clients with the highest scores:
\begin{math}
\mathcal C_q
=
\operatorname{TopB}_{i\in\{1,\ldots,M\}}s_i(q).
\end{math}

\begin{figure*}
    \centering
    \includegraphics[width=1\linewidth]{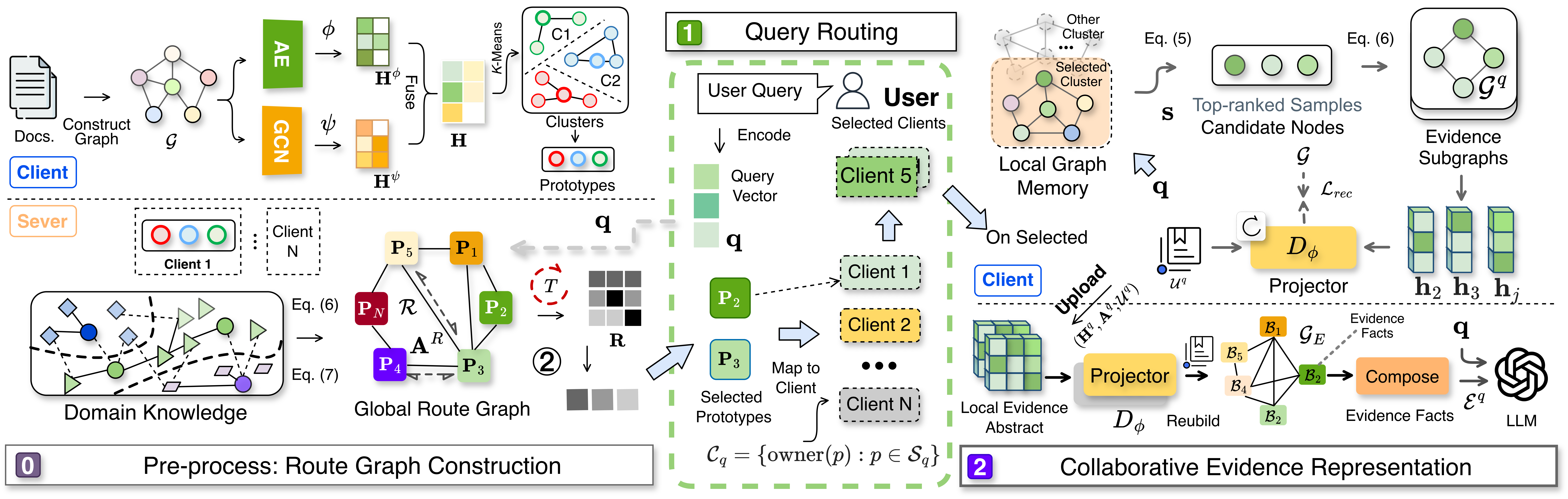}
    \caption{The architecture of \method{}. 
    This process consists of a pre-process and a query phase. During the pre-process phase, the client and server perform collaborative training to extract low-dimensional graph embeddings; the client then uploads prototypes to the server to construct a prototype graph. In the query phase, the server routes the query to the appropriate semantic cluster on the client. Subsequently, an evidence subgraph is constructed within the cluster, and the  server aggregates these reconstructed evidence facts to generate retrieval results.
    }
    \label{fig:framework}
\end{figure*}

\subsection{Client Evidence Representation}
After client routing, the objective of this stage is to construct a unified evidence representation from the retrieved knowledge distributed across multiple selected clients. Each client first extracts a query-relevant evidence subgraph from its local graph memory and map the subgraph into evidence representation that captures both semantic information and structural dependencies. The server then collaboratively integrates them from different clients to form a global evidence affects, which serves as the knowledge context for the subsequent generation module.

\textbf{Local Graph Retrieval}
After receiving the query, each selected client constructs a query-relevant evidence subgraph from its local graph memory. We first compute the semantic relevance between  $\mathbf{q}$ and each local entity  $\mathbf{h}_v$. Then the Top-$N_s$ most relevant entities are selected as seed, as
\begin{equation}
s_i(v \mid q)=\operatorname{sim}(\mathbf{q},\mathbf{h}_v).
\quad
\mathcal{S}^q=\operatorname{TopN_s}_{v\in\mathcal{V}}
s_i(v\mid q).
\end{equation}
Starting from these seed entities, we expand the neighborhood a $H$-hop over its local graph. The resulting candidate node set is defined as
\begin{math}
\mathcal{V}^q=\bigcup\nolimits_{v\in\mathcal{S}^q}\mathcal{N}_{\leq H}(v),
\end{math}
where $\mathcal{N}_{\leq H}(v)$ denotes the set of entities reachable from $v$ within at most $H$ hops. The corresponding evidence subgraph is then constructed as
\begin{equation}
\mathcal{G}^q
=
\left(
\mathcal{V}^q,
\mathcal{E}^q
\right),
\mathcal{E}^q
=
\left\{
(u,r,v)\in\mathcal{E}_i
\mid
u,v\in\mathcal{V}^q
\right\}.
\end{equation}
To prevent irrelevant neighborhood expansion, we retain at most $N_r$ edges with the highest query relevance scores. The resulting subgraph $\mathcal{G}^q$ is kept locally and is subsequently projected into a compact evidence representation for collaboration. 

\paragraph{Local Evidence Abstraction.}
To support cross-client evidence composition while restricting the disclosure scope, we introduce query-conditioned evidence encoding
and reconstruction. For the retrieved evidence subgraph $\mathcal{G}^q$, we first construct a query-specific vocabulary $\mathcal{U}^q$ containing the tokens that appear in its entity and relation attributes. The original
attributes are then converted into discrete token sequences according to $\mathcal{U}^q$ and mapped into continuous attribute embeddings.

We utilize representation $\mathbf{H}^q =\mathbf H[\mathcal V^q,:]$ to define query-conditioned evidence package as $\mathcal{P}^q=(\mathbf{H}^q,\mathbf{A}^q,\mathcal{U}^q).$ The source documents and the remaining local knowledge graph are retained by the client.
On the server, a shared evidence decoder $D_{\phi}$ inverse mapping from the latent representation to the corresponding entity and relation attributes:
\begin{math}
(\hat{\mathbf{X}}^q,\hat{\mathbf{T}}^q)=\operatorname{Decode}_{\mathcal{U}^q}
[D_{\phi}(\mathbf{H}^q,\mathbf{A}^q)],
\end{math}
where $\hat{\mathbf{X}}^q$ and
$\hat{\mathbf{R}}^q$ denote the reconstructed entity and relation attributes, respectively. Combining the reconstructed attributes with the disclosed adjacency structure yields the reconstructed evidence subgraph $\hat{\mathcal{G}}^q$
The reconstructed subgraph is later converted into a set of facts from the evidence $\widehat{\mathcal{B}}^q$ for the composition of the evidence between clients. 
The encoder and decoder are trained by minimizing the reconstruction loss over the vocabulary-encoded entity and relation attributes:
\begin{equation}
\mathcal{L}_{\mathrm{rec}}
=
\operatorname{CE}(
\hat{\mathbf{X}}^q,
\mathbf{X}^q)
+
\operatorname{CE}(
\hat{\mathbf{T}}^q,
\mathbf{T}^q),
\end{equation}
where $\operatorname{CE}(\cdot)$ denotes the token-level reconstruct loss.

\textbf{Collaborative Evidence Composition.}
The server reconstructs evidence facts $\hat{\mathcal B}$ using the shared decoder and aggregates them as
\begin{math}
\hat{\mathcal B}^{q}
=
\bigcup_{i\in\mathcal C_q}\hat{\mathcal B}_i^{q}.
\end{math}
To connect evidence across clients, we construct an evidence
interaction graph
$\mathcal G_E^q=(\hat{\mathcal B}^q,\mathcal A_E^q)$,
where nodes represent reconstructed facts and edges capture potential reasoning dependencies. 
Starting from facts with high semantic similarity to the query, the server traverses graph edges to collect connected supporting facts across clients.
The resulting evidence paths are serialized into a textual
context $\mathcal E^q$ in traversal order, with repeated facts removed and client-level provenance retained. Finally, an external LLM generates the answer from the query and composed evidence:
\begin{equation}
\hat y
=
\operatorname{LLM}
\left(
\operatorname{Prompt}(q,\mathcal E^q)
\right).
\end{equation}
\subsection{Joint Optimization Objective}
We optimize the routing module and local evidence encoders using two complementary objectives. 

\textbf{Routing Loss (Server).}
For a query $q$, the routing module produces a relevance score $R^q$ for each of the $M \times K$ clusters.
We adopt a pairwise ranking loss to encourage positive clusters to receive higher scores than negative clusters:
\begin{equation}
\mathcal{L}_{\mathrm{route}}
=
\sum\nolimits_{i\in\mathcal{C}_q^{+}}
\sum\nolimits_{j\in\mathcal{C}_q^{-}}
\max(0,\gamma-s_i(q)+s_j(q)),
\end{equation}
where $\gamma$ is a  hyperparameter. $\mathcal{C}_q^{+}$ and $\mathcal{C}_q^{-}$ denote the positive and negative cluster sets of clients.

\textbf{Global Evidence-Answer Alignment.}
We design a global evidence-answer alignment objective. For each training pair $(q,y)$, the evidence interaction graph decoder $D_\phi$ produces the global reconstructed node attribute $\hat{\mathbf{X}}_g^q$. We obtain a fixed semantic target from the reference answer:
\begin{equation}
\mathbf{a}_y = \operatorname{Enc}_{\mathrm{fixed}}(y),
\qquad
\hat{\mathbf{X}}_g^q = D_\phi(\mathbf{H}_g^q),
\end{equation}
where $\operatorname{Enc}_{\mathrm{fixed}}$ is a frozen text encoder.
We encourage the global evidence representation to align with the correct answer while distinguishing it from negative answers:
\begin{equation}
\mathcal{L}_{\mathrm{g}}(q,y)
=
-\log
\frac{
\exp(
\operatorname{sim}(\hat{\mathbf{x}}_g^q,\mathbf{a}_y)/\gamma)
}{
\exp(
\operatorname{sim}(\hat{\mathbf{x}}_g^q,\mathbf{a}_y)/\gamma)
+
\sum_{y^{-}\in\mathcal{Y}_q^{-}}
\exp(
\operatorname{sim}(
\hat{\mathbf{x}}_g^q,\mathbf{a}_{y^{-}})/\gamma)
},
\end{equation}
where $\mathcal{Y}_q^{-}$ contains negative answers sampled from other training examples.
Known equivalent answers and other valid answers to $q$ are excluded from the negative set.
This objective optimizes $D_\phi$, encouraging the composed evidence to retain information predictive of the target answer.



\textbf{Local Evidence Alignment.}
Each selected client retrieves local evidence subgraphs $\mathcal{G}^q$ and their representation $\mathbf{E}^q$.
To preserve query-relevant semantics, we define $\mathcal{P}_q\subseteq\mathcal{C}_q\cap\mathcal{C}_q^{+}$, where $\mathcal{C}_q^{+}$
contains clients whose retrieved subgraphs provide positive evidence. we align positive evidence with $\mathbf{q}$.
The local alignment loss is as
\begin{equation}
\mathcal{L}_{\mathrm{l}}
=
-\sum\nolimits_{i\in\mathcal{P}_q}
\log
\frac{
\exp(
\operatorname{sim}(\mathbf{h}_i^q,\mathbf{q})/\gamma)
}{
\exp(
\operatorname{sim}(\mathbf{h}_i^q,\mathbf{q})/\gamma)
+
\sum_{\mathbf{h}^{-}\in\mathcal{N}_q}
\exp(
\operatorname{sim}(\mathbf{h}^{-},\mathbf{q})/\gamma)
},
\end{equation}
where $\mathcal{N}_q$ contains evidence representations treated as irrelevant to $q$. Negative evidence may be sampled from irrelevant clients, with known positive evidence for $q$ excluded.

\textbf{Overall Objective.}
The server and clients are trained with the following objective:
\begin{equation}
\mathcal{L}_{s} = \mathcal{L}_g + \mathcal{L}_{route} \quad
\mathcal{L}_{c}
=
\frac{1}{|\mathcal{D}|}
\sum\nolimits_{(q,y)\in\mathcal{D}}
\lambda_l\mathcal{L}_{\mathrm{l}} + \mathcal{L}_{rec}.
\end{equation}
The routing loss improves client relevance estimation, while the local alignment loss encourages local evidence representations to preserve query-relevant semantics.

\begin{table*}[tb!]
\centering
\renewcommand{\arraystretch}{0.73}
\resizebox{\textwidth}{!}{
\begin{tabular}{l|cccc|cccc|cccc}
\toprule
\multirow{2}{*}{\textbf{Models}} & 
\multicolumn{4}{c}{\textbf{HotpotQA}} & 
\multicolumn{4}{|c}{\textbf{MuSiQue}} & 
\multicolumn{4}{|c}{\textbf{2Wiki
}} \\\cmidrule{2-13}
&\multicolumn{12}{|c}{5 Clients} \\
\cmidrule{2-13}
& R@2 & R@5 & EM & F1 
& R@2 & R@5 & EM & F1 
& R@2 & R@5 & EM & F1  \\
\midrule
GraphRAG
&10.9 &13.4 & 39.1 & 51.2 
&8.9&14.7&13.1&21.9
&17.1&20.6&31.0&34.7
\\
LightRAG& 12.3 & 17.7& 37.8&40.5&
11.9& 12.2&18.5&24.1&13.2&19.0& 22.3 &23.9\\
HippoRAG &9.6&21.0&28.7&31.1&12.3&13.1&20.2&22.6&
12.9&22.4&33.9&37.3
\\
SubgraphRAG&
15.0& 19.1 & 26.7 & 30.3
&7.0 & 16.2& 23.1 & 25.7& 14.4 & 18.6 &25.0 &38.2

\\
HippoRAG2 
&15.3 & 16.1& 21.3&28.7
& 10.4 & 15.6 &18.1&23.3
& 15.8 & 16.0 &34.6& 35.2
\\
HyperGRAG& 18.9 &29.8&43.0 
& 50.1&9.6&16.4 & 18.5&21.3&
12.6 & 15.1 & 31.4 & 34.3
\\
GFM-RAG & 22.4 & 38.9 & 57.2 & 69.1
        & 16.1 & 25.3 & 24.0 & 31.2
        & 18.4 & 29.8 & 43.6 & 45.9 \\
\midrule
FRAG & 21.8 & 32.6& 55.5 & 59.1 & 14.5& 20.0 & 22.3&25.6
&11.9&14.8&28.3&31.6\\
FedE4RAG& 13.5 & 19.7& 24.6 &38.8
&10.0&17.3&24.7&29.6
&14.6&18.4&32.1&36.9
\\
C-FedRAG&17.0&24.8&38.2&47.9
&12.7&15.3&21.8&23.4
&15.3&17.2&22.1&28.4\\
FedMosaic& 22.0 & 36.9 & 57.8 & 66.5 & 15.7 & 23.4 & 24.1 & 30.6 & 17.8 & 28.3 & 42.6 & 46.5 

\\\midrule
\textbf{Ours}&25.1  &44.7&64.0&76.4&18.2&28.7&26.6&34.2&20.5&34.6&48.3&49.2\\
\midrule
\multicolumn{13}{c}{10 Clients} \\
\midrule
GraphRAG
&13.1&23.0&30.0&41.5
&12.5&15.4&18.3&26.2
&5.9&13.6&38.8&41.6
\\
LightRAG&9.1&11.2&25.9&30.7&11.7&12.5&13.1&25.9& 7.4&11.2&29.1&33.8
\\
HippoRAG &13.8&19.5&21.5&28.7
&9.6&13.5&17.4&18.0 
&9.0&13.5&31.1&36.7\\
SubgraphRAG&10.5 & 13.3 & 18.9 & 21.0
&8.8&13.2&22.7&24.9
&9.7&12.4&27.6&28.5
\\
HippoRAG2&12.7&21.9& 26.4&27.7&
10.9&12.6&21.7&24.5&8.5 &11.2 &33.1 &38.0\\
HyperGRAG
&11.8&15.0&33.7&38.1
&9.7&15.2&18.3&22.9
&10.4&16.7&34.4&42.5\\
GFM-RAG & 16.8 & 27.4 & 41.2 & 46.8
        & 12.9 & 20.1 & 22.8 & 29.4
        & 10.1 & 15.7 & 44.6 & 46.3 \\
        \midrule
FRAG&16.4&19.2&29.8&34.0 &7.5&10.4&17.6&19.3
&7.6&13.8&26.1&30.5
\\
FedE4RAG
&15.9&23.0&27.5&33.4
&9.7&9.9&21.2&24.2
&8.5&9.4&28.6&29.0\\
C-FedRAG&
6.3&13.2&24.4&31.5&
6.5&14.8&17.9&22.3&
9.2&16.9&37.4&41.3\\
FedMosaic & 17.1 & 27.3 & 40.9 & 47.6 & 12.8 & 20.7 & 23.6 & 29.8 & 10.4 & 16.1 & 43.8 & 48.1 
\\\midrule
\textbf{Ours}       &19.4&31.7&46.8&52.5&14.7&23.9&25.5&33.0&11.8&18.4&50.3&51.2\\
\midrule
\multicolumn{13}{c}{20 Clients} \\
\midrule
GraphRAG&11.8&19.4&28.5&41.8&3.5&6.0&27.0&28.1&5.1&8.2&11.6&24.3\\
LightRAG&8.7&9.1&27.0&30.6 & 8.8 & 14.3 & 22.1 & 27.8 & 3.1 &4.5 &10.7&17.5\\
HippoRAG&7.9&11.2&22.8&29.0
&\multicolumn{4}{c|}{OOM}
&\multicolumn{4}{c}{OOM}
\\
SubgraphRAG&12.3&16.9 &31.8 & 36.7&\multicolumn{4}{c|}{OOM}
&\multicolumn{4}{c}{OOM}\\
HippoRAG2& 11.8& 14.0&19.6&23.7&7.1&12.2&28.3&34.6
&7.2&8.5&13.1&19.0\\
HyperGRAG& 13.5 & 18.9 & 23.5 & 31.0 & 6.9&17.2&32.3&35.8
&\multicolumn{4}{c|}{OOM}
\\
GFM-RAG & 14.9 & 23.8 & 39.5 & 47.6
        & 10.7 & 15.4 & 36.2 & 36.9
        & 7.1 & 10.6 & 15.8 & 46.4 \\
\midrule
FRAG&13.5&18.8&32.6 &34.9&8.7 &12.5 &33.4&34.8&8.0&11.0&15.3&26.7\\
FedE4RAG
&10.6&18.7&24.4&37.5
&9.5&13.8&31.2&33.1
&6.7&9.4&14.0&17.9
\\
C-FedRAG
&13.6&17.2&23.1&40.6
& 9.4&15.2&29.9&31.0
&7.7&11.3&12.4&15.8\\
FedMosaic& 15.3 & 23.6 & 38.7 & 47.2 & 10.8 & 16.2 & 35.7 & 38.8 & 7.4 & 11.6 & 16.4 & 24.2 
\\\midrule
\textbf{Ours} &17.4&27.5
&44.2&53.9&12.4&17.9& 40.6&40.6
&8.3&12.4&17.9&51.1\\
\bottomrule
\end{tabular}}
\caption{Performance comparison with different state-of-the-art GraphRAG methods on three datasets.}
\label{tab:compare:main}
\end{table*}

\subsection{Efficiency Analysis}
The global routing graph contains $P=M\times K$ prototype nodes. Graph construction requires $\mathcal{O}(P^2 d)$ operations by computing all pairwise prototype similarities. After connecting each prototype to its top-$L$ neighbors, the routing graph contains approximately $\mathcal{O}(PL)$ edges. For each query, initial query-prototype matching requires $\mathcal{O}(Pd)$ operations, while $T$ rounds of graph-based relevance propagation require approximately $\mathcal{O}(TPL)$ operations.
After selecting $B$ clients, constructing the evidence interaction graph requires $\mathcal{O}(B^2d)$ operations. Accordingly, the online computational complexity can be summarized as $\mathcal{O}(MKd+TMKL+B^2d)$.
\section{Experiments}

\subsection{Experimental setup}
\noindent\textbf{Benchmark Datasets.}
To comprehensively evaluate \method{}, we conduct experiments on multiple widely used GraphRAG benchmarks, including HotpotQA, MuSiQue, and 2Wiki, aligning with GFM-RAG. Since they are designed for centralized settings, we construct federated scenarios by partitioning the original graph knowledge into multiple clients while preserving realistic knowledge heterogeneity to simulate practical distributed environments. (Please see \textbf{Appendix} \ref{app:sec:datasets} for details.)

\noindent\textbf{Baseline Methods.}
We compare FedGRAG with representative baselines from three categories. (1) centralized GraphRAG methods, including GraphRAG (MS) \citep{edge2024local}, LightRAG \citep{guo2025lightrag}, HippoRAG \citep{gutierrez2024hipporag}, SubgraphRAG \citep{li2025simple},
HyperGraphRAG \citep{luo2026hypergraphrag}
HippoRAG2 \citep{gutierrez2025rag}, GFM-RAG \cite{luo2026gfm}. (2) Federated RAG methods, including FRAG \cite{zhao2024frag}, FedE4RAG \cite{mao2025privacy}, C-FedRAG \cite{addison2024c}, FedMosaic \cite{liang2026fedmosaic} which perform retrieval over distributed knowledge sources.  (Please see \textbf{Appendix} \ref{app:sec:baselines} for details.)

\noindent\textbf{Evaluation Metrics.}
We evaluate the FedGRAG from three complementary perspectives. For retrieval quality, we adopt Recall@2 (R@2) and Recall@5 (R@5) to measure the effectiveness of retrieving relevant graph knowledge. For downstream generation, we report Exact Match (EM) and F1 scores to evaluate the correctness and semantic overlap between the generated answers and the ground-truth responses. (Please see \textbf{Appendix} \ref{app:sec:evaluation} for details.)

\noindent\textbf{Implementation Details.}
All experiments are implemented using PyTorch 2.4.0 and conducted on a server equipped with NVIDIA 4090D GPUs. All methods share the same LLM, graph retrieval pipeline, and retrieval configuration. The federated setting adopts the same client partition strategy across all methods, and hyperparameters are selected based on the validation set to ensure a fair and reproducible evaluation. (Please see \textbf{Appendix} \ref{app:sec:imple} for details.)

\subsection{Comparison Experiments}
Table~\ref{tab:compare:main} reports the overall performance comparison. 
1) Compared with federated GraphRAG methods, \method{} consistently achieves superior performance across all datasets. Although these methods are adapted to the federated setting, they perform retrieval independently within selected clients and lack explicit modeling of semantic dependencies among distributed graph memories. In contrast, FedGRAG constructs a global semantic routing graph over client prototypes, enabling more accurate client selection and more comprehensive cross-client knowledge discovery.
2) Compared with federated RAG methods, \method{} further improves both retrieval and generation performance. Existing federated RAG methods retrieve textual evidence from distributed clients without exploiting graph structures or semantic relationships among clients. By leveraging graph-structured knowledge and collaborative evidence composition, FedGRAG effectively captures complementary evidence from multiple clients, leading to more informative context for answer generation.
\begin{wrapfigure}[11]{r}
{0.5\textwidth}
\vspace{-10pt}
    \centering
\includegraphics[width=1\linewidth]{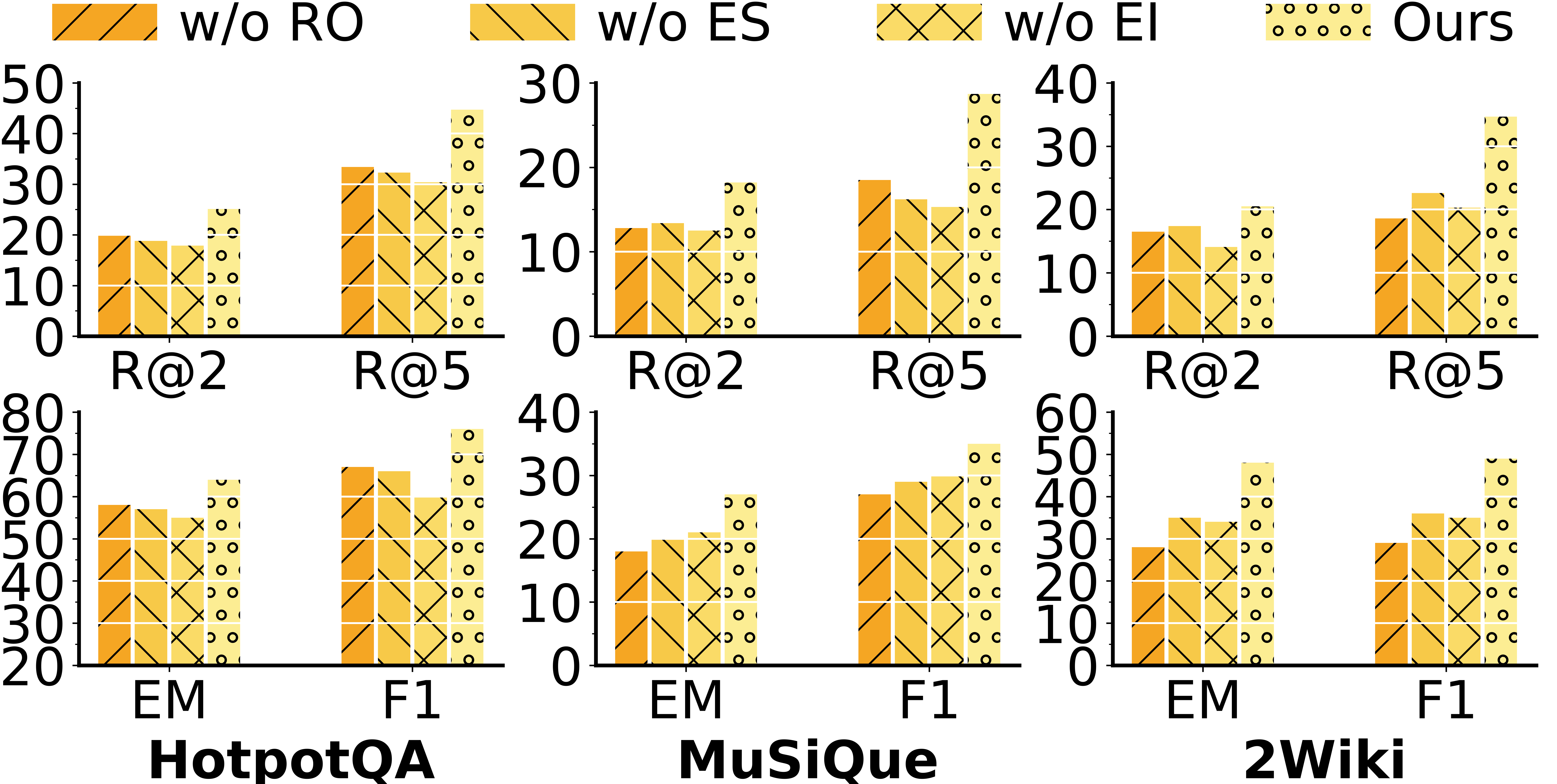}
    \caption{Model performance of \method{} and its three variants.}
    \label{fig:ablation}
\end{wrapfigure}
3) Compared with the GFM-based GraphRAG method, FedGRAG also achieves consistent improvements. Although GFM-based RAG enhances graph representation through graph foundation models, it mainly focuses on improving local graph representations and does not explicitly model semantic collaboration across distributed clients. Our semantic routing graph and collaborative evidence composition enable effective cross-client knowledge integration, resulting in higher retrieval accuracy and better downstream generation performance.

\subsection{Ablation Study}
\begin{wraptable}[11]{r}
{0.60\linewidth}
\vspace{-25pt}
\caption{Client routing performance on three multi-hop QA datasets (\%) under 10 clients.}
\label{tab:route}
\renewcommand{\arraystretch}{0.73}
\resizebox{0.58\columnwidth}{!}{
\begin{tabular}{llcccc}
\toprule
Dataset & Method
& R@2 
& Hit@2 
& R@5 
& Hit@5  \\
\midrule
\multirow{3}{*}{HotpotQA}
& Random Routing  & 7.2 & 31.6 & 9.4 & 34.0 \\
& Direct Matching & 8.5 & 42.4 & 12.3 & 46.2 \\
& Ours & 19.4&66.7& 31.7 & 71.4 \\
\midrule
\multirow{3}{*}{MuSiQue}
& Random Routing  & 6.9 & 13.2 &12.0 & 34.3 \\
& Direct Matching & 8.8 & 25.1 & 14.6 & 42.2 \\
& Ours & 14.7 & 68.6  & 23.9 & 73.2 \\
\midrule
\multirow{3}{*}{2Wiki}
& Random Routing  & 9.5 & 33.4 & 11.4& 36.8 \\
& Direct Matching & 7.7& 26.9 & 12.6& 34.7 \\
& Ours & 11.8 & 45.9
& 18.4 & 52.3 \\
\bottomrule
\end{tabular}}
\end{wraptable}
\textbf{Module Ablation.}
To examine the contribution of the proposed components, we
evaluate three variants of \method{}:
(1) \textbf{w/o RO}, which remove route module;
(2) \textbf{w/o ES}, which replaces evidence subgraph retrieval
with direct similarity-based text retrieval; and
(3) \textbf{w/o EI}, which removes the global evidence interaction
graph and directly concatenates the retrieved evidence as
input to the LLM.
As shown in Fig.~\ref{fig:ablation}, the complete \method{} consistently achieves the best performance, while Removing any of the components will result in a decline in model performance. These results support the complementary roles of the proposed components in distributed knowledge retrieval and reasoning.
The performance degradation of \textbf{w/o ES} suggests that providing more retrieved text does not necessarily yield more
useful evidence: similarity-based retrieval can introduce irrelevant content and overlook structural connections needed for reasoning. 
Moreover, directly providing raw text increases source-content
exposure compared with releasing query-relevant reconstructed facts. Removing the evidence interaction graph (\textbf{w/o EI}) also reduces performance.
Direct concatenation leaves cross-client evidence connections implicit, whereas graph-guided composition organizes related
facts into connected evidence chains, facilitating multi-hop
reasoning across clients.

\textbf{Routing Strategy Ablation}
To evaluate whether the federated routing effectively identifies relevant knowledge sources, We compare it with the following strategies, random routing: randomly selects a fixed number of clients for each query, direct routing: represents each client using the mean embedding of its local graph nodes and selects clients according to query-prototype similarity.
As shown in Table~\ref{tab:route}, \method{} consistently outperforms both baselines. 
The improvement over Random Routing demonstrates the importance of query-aware client selection, while the gain over Direct Routing indicates that modeling inter-client relationships provides additional benefits beyond independent semantic matching, particularly when relevant evidence is distributed across multiple clients.

\subsection{Hyperparameter Sensitivity Analysis}

\begin{wrapfigure}[10]{r}{0.6\textwidth}
\vspace{-10pt}
    \centering   
    
    \includegraphics[width=1\linewidth]{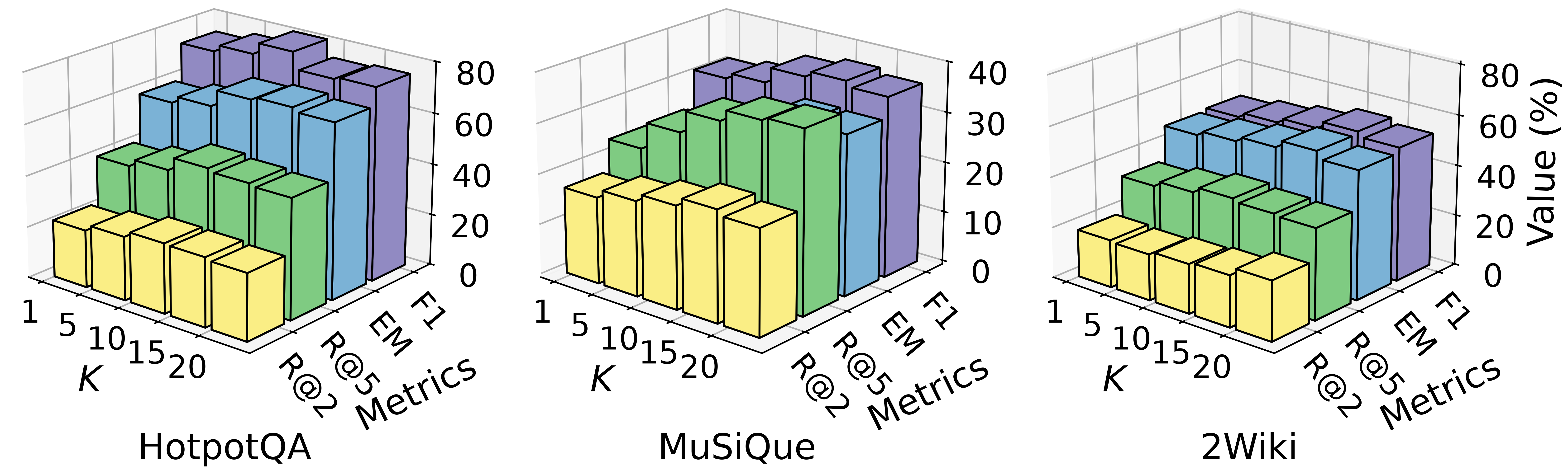}
    \caption{Model performance under varying $K \in [1, 20]$.}
    \label{fig:hyper-k}
\end{wrapfigure}
\textbf{Effect of Prototype Number $K$.} We vary $K$ from 1 to 20 to assess its influence on model performance. As shown in Fig. \ref{fig:hyper-k}, increasing $K$ initially improves performance because more semantic regions are preserved, leading to finer-grained client routing. However, excessive prototypes introduce redundant semantic partitions and increase routing ambiguity, resulting in marginal performance degradation. Therefore, we choose different settings for each datasets as a trade-off between routing accuracy and communication efficiency.

\begin{wrapfigure}[9]{r}{0.5\linewidth}
    \centering
    \vspace{-10pt}
    \includegraphics[width=1\linewidth]{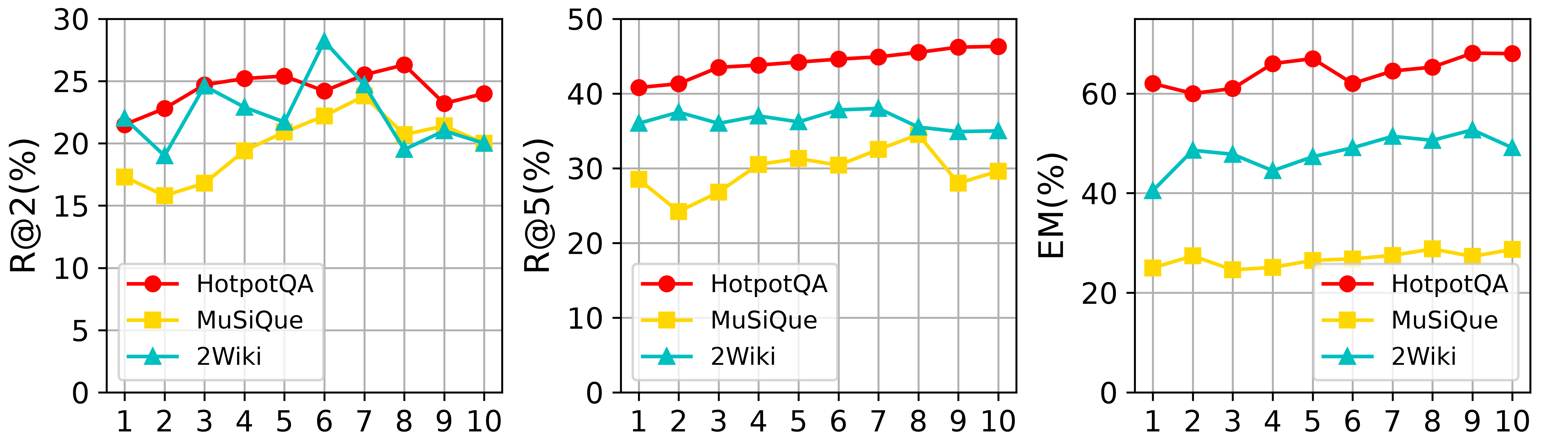}
    \caption{Model performance on varying $L$ $\in$ [$1$, $10$] under three datasets.}
    \label{fig:hyper-l}
\end{wrapfigure}

\noindent\textbf{Effect of Routing Graph Connectivity.}
We further investigate the neighborhood size $L$ used for constructing the routing graph. A small $L$ results in sparse prototype connections, limiting semantic propagation across clients. Conversely, an excessively large L introduces noisy semantic relations that dilute query relevance. As illustrated in Fig. \ref{fig:hyper-l}, the best performance is achieved at near $L$=6, demonstrating that moderate graph connectivity provides the most effective balance between semantic exploration and routing precision.

\begin{wraptable}[11]{r}{0.42\textwidth}
\centering
\renewcommand{\arraystretch}{0.73}
\footnotesize
\vspace{-15pt}
\setlength{\tabcolsep}{2pt}
\caption{Communication overhead comparison under the HotpotQA datasets. ``T'' and ``C'' denote Time (s) and Cost (KB), respectively. }
\label{tab:overhead}
\resizebox{0.4\columnwidth}{!}{
\begin{tabular}{lcccccc}
\toprule[1pt]
\multirow{2}{*}{Clients} & 
\multicolumn{2}{c}{Ours} & 
\multicolumn{2}{c}{FRAG} & 
\multicolumn{2}{c}{C-FedRAG} \\
\cmidrule(lr){2-3}\cmidrule(lr){4-5}\cmidrule(lr){6-7}
 &  T (s) & C (KB) & T (s) & C (KB) & T (s) & C (KB)\\
\midrule
 1 &24.8&50.3& 130.3 & 213.0 & 37.1 & 66.4 \\ 
 2 &51.3&100.6&270.2 & 256.4 & 68.5  & 134.9 \\
 3 &78.6&150.9&300.4 & 375.5 & 105.3 & 201.5 \\
4 &110.1&201.1&418.3& 420.3 & 137.0  & 274.6 \\
 5 &140.9&256.4&566.5 & 458.5 & 174.5  & 355.3 \\
\bottomrule[1pt]
\end{tabular}}
\end{wraptable}
\subsection{Communication Cost and Runtime Efficiency}
To evaluate the practical deployment cost of federated graph retrieval, we measure communication overhead and end-to-end inference latency on HotpotQA under varying numbers of clients. As shown in Table~\ref{tab:overhead}, our framework consistently achieves lower communication cost and inference latency than both FRAG and C-FedRAG across all evaluated settings. As the number of clients increases, its communication overhead grows approximately linearly, while its runtime advantage over the baselines is maintained. These results demonstrate the efficiency of our framework within the evaluated client range and support its practical applicability to federated retrieval scenarios with limited communication resources and latency requirements.
\section{Conclusion}
This paper presented \method{}, a unified framework for distributed GraphRAG over distributed knowledge graphs. By jointly introducing federated knowledge routing and evidence abstraction, the proposed framework effectively enables collaborative knowledge retrieval and utilization without exposing local knowledge graphs. Compared with existing centralized GraphRAG methods, \method{} extends graph retrieval to federated environments while reducing data privacy risks. Extensive experiments demonstrate its superiority in both retrieval quality and generation across diverse federated settings. We believe \method{} provides a practical and scalable paradigm for knowledge-enhanced generation over distributed knowledge ecosystems and offers new opportunities for deploying GraphRAG in privacy-sensitive real-world applications.

\subsection*{AI use statement}
Generative AI tools were used to assist with language editing, improving the readability of the manuscript, LaTeX formatting, and debugging author-written
code. They were not used to develop the proposed methodology, design the
experiments, formulate or prove mathematical claims, process the datasets, or interpret the experimental results. All AI-assisted content was carefully reviewed and verified by the authors. The authors take full responsibility for
the final content of this work.

\bibliography{ref}
\bibliographystyle{iclr2027_conference}

\appendix
\section{Detailed process of the algorithm}
\label{app:sec:algorithm}
The preprocessing and retrieval procedures are summarized in Algorithms~\ref{alg:preprocess} and~\ref{alg:retrieval}, respectively.
\section{Details of Experimental Settings}
\subsection{Detail of Datasets}
\label{app:sec:datasets}
We consider three question answering benchmarks: HotpotQA, MuSiQue, and 2Wiki. These benchmarks require integrating evidence across multiple documents, making them suitable for evaluating knowledge retrieval and evidence aggregation in FedGRAG.To adapt these centralized benchmarks to the federated setting, we distribute the graph knowledge across multiple clients, each maintaining its local knowledge graph. This setting allows us to evaluate retrieval and reasoning over distributed knowledge sources. Table~\ref{tab:dataset_statistics} summarizes the statistics of the training data and the associated knowledge graphs.

\subsection{Details of Baselines}
\label{app:sec:baselines}

We consider representative methods from two categories:
graph-based retrieval-augmented generation and federated
retrieval-augmented generation.

\paragraph{Graph-based RAG Methods.}
\begin{itemize}
    \item \textbf{GraphRAG (MS)} constructs an entity graph
    from source documents and generates hierarchical community
    summaries to support query-focused knowledge aggregation.

    \item \textbf{LightRAG} combines graph-based indexing with
    dual-level retrieval to capture both specific entities and
    broader relational information.

    \item \textbf{HippoRAG} integrates knowledge graphs with
    Personalized PageRank to retrieve related evidence through
    associative connections across documents.

    \item \textbf{SubgraphRAG} retrieves query-relevant subgraphs
    using a lightweight triple-scoring model enhanced with
    directional structural distance features.

    \item \textbf{HippoRAG2} extends HippoRAG through deeper
    passage integration and LLM-assisted retrieval, improving
    the interaction between passage-level information and
    graph-based associative retrieval.

    \item \textbf{HyperGraphRAG} represents knowledge using
    hypergraphs, allowing a single relation to connect multiple
    entities and preserving higher-order relational information.

    \item \textbf{GFM-RAG} employs a graph foundation model
    trained across multiple knowledge graphs to learn
    transferable query-conditioned graph retrieval.
\end{itemize}

\paragraph{Federated RAG Methods.}
\begin{itemize}
    \item \textbf{FRAG} supports collaborative approximate
    nearest-neighbor search over distributed vector databases,
    using homomorphic encryption to protect query vectors
    and stored representations.

    \item \textbf{FedE4RAG} collaboratively trains local
    retrieval models through federated embedding learning,
    incorporating knowledge distillation and homomorphic
    encryption for model collaboration.

    \item \textbf{C-FedRAG} coordinates retrieval across
    distributed data providers and uses confidential computing
    to protect the retrieved context during processing.

    \item \textbf{FedMosaic} encodes local documents into
    parametric adapters and selectively aggregates relevant
    adapters to support generation without transmitting
    raw documents.
\end{itemize}

\paragraph{Adaptation to the Federated Setting.}
For graph-based baselines, indexing and retrieval are performed within individual clients using their assigned local knowledge.
The federated variants use random route strategy to select clients and average strategy to combine local retrieval outputs. We preserve each method's core indexing and retrieval mechanism while using the same corpus partitions and evaluation queries. These configurations are distinguished from centralized execution over the complete corpus. Any changes to the original training, generation,
or security mechanisms are explicitly documented.

\subsection{Details of Evaluation Metrics}
\label{app:sec:evaluation}
We evaluate FedGRAG from three perspectives: evidence retrieval, client routing, and downstream answer generation. All metrics are computed for each query and then averaged over the complete test set. Higher values indicate better performance.

\paragraph{Recall@$k$.}
Recall@$k$ measures the proportion of ground-truth supporting evidence covered by the top-$k$ retrieved results. We report Recall@2 and Recall@5 to evaluate retrieval quality under different evidence budgets. A higher Recall@$k$ indicates that the retrieved results contain more evidence required to answer the query.

\paragraph{Hit@$k$.}
Hit@$k$ evaluates the effectiveness of client routing. A query is considered successfully routed when at least one of the top-$k$ selected clients contains ground-truth supporting evidence. We report Hit@2 and Hit@5 and average the results across all test queries.

\paragraph{Exact Match.}
Exact Match (EM) measures the percentage of generated answers that exactly match a ground-truth answer after standard text normalization. Following common question-answering evaluation protocols, the normalization procedure converts text to lowercase and removes punctuation, articles, and redundant whitespace. When multiple reference answers are available, a prediction is considered correct if it matches any valid reference.

\paragraph{Token-level F1.}
Token-level F1 measures the token overlap between the generated answer and the ground-truth answer by combining token-level precision and recall. Compared with EM, F1 assigns partial credit to answers that overlap substantially with the reference but do not match it exactly. When multiple reference answers are provided, we report the maximum F1 score obtained against all valid references.

\subsection{Detail of Implementation}
\label{app:sec:imple}

\paragraph{Implementation Environment.}
All experiments are implemented in Python using PyTorch~2.4.0 and conducted on a server equipped with NVIDIA RTX 4090D GPUs. The graph-related operations, including neighborhood expansion, prototype-graph construction, and evidence-graph propagation, are implemented using PyTorch. We use MiniLM as the shared text encoder for queries and textual node attributes. The encoder produces representations of dimension $64$. Unless otherwise specified, all experiments use the same software environment and hardware configuration.

\paragraph{Local Graph Encoding and Prototype Construction.}
For each client, textual node attributes are first encoded by the shared attribute encoder $\psi$. Structural representations are obtained using a 3-layer GNN encoder $\phi$ and \texttt{RELU} activation. The attribute and structural representations are concatenated and projected into a common $d$-dimensional latent space using a MLP.

We apply $K$-means clustering independently to the fused node representations of each client. Based on the validation results reported in Section~4.4, the number of knowledge prototypes is set to $K$. Each cluster centroid is used as a client knowledge prototype and uploaded to the server during the preprocessing stage. The server constructs the global routing graph by connecting each prototype to its $L$ most similar prototypes according to cosine similarity. Self-connections are removed, and the resulting adjacency matrix is normalized before relevance propagation. 

\paragraph{Training Configuration.}
The client evidence encoder and projector are optimized using the \texttt{Adam} optimizer with an initial learning rate of \texttt{$1\times10^{-3}$}, a weight decay of \texttt{$5\times10^{-4}$}. The server routing and evidence-composition modules are optimized using a learning rate of \texttt{$1\times10^{-3}$}. We train the model for \texttt{10} epochs.

\begin{theorem}[Routing Coverage Guarantee]
\label{thm:routing_coverage}
For a query $q$, let $\mathcal C_q^+$ denote the set of evidence-bearing clients and let
$\widehat{\mathcal C}_q=\operatorname{TopB}_{i\in[M]}s_i^{(T)}(q)$
be the clients selected by the routing module. Assume that
$B\geq |\mathcal C_q^+|$. Then, the number of missed
evidence-bearing clients is bounded by
\begin{equation}
\left|\mathcal C_q^+\setminus\widehat{\mathcal C}_q\right|
\leq
\frac{\mathcal L_{\mathrm{route}}(q)}{\gamma}.
\end{equation}
Consequently, the client-level routing recall satisfies
\begin{equation}
\operatorname{Recall}_{\mathrm{client}}@B
\geq
1-\frac{\mathcal L_{\mathrm{route}}(q)}
{\gamma|\mathcal C_q^+|}.
\end{equation}
In particular, if $\mathcal L_{\mathrm{route}}(q)=0$, all
evidence-bearing clients are selected.
\end{theorem}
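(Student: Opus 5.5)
The plan is a pigeonhole argument on the selected set, followed by a term-by-term lower bound on the pairwise hinge loss $\mathcal L_{\mathrm{route}}(q)=\sum_{i\in\mathcal C_q^+}\sum_{j\in\mathcal C_q^-}\max(0,\gamma-s_i(q)+s_j(q))$. Here $s_i(q)=s_i^{(T)}(q)$ is the propagated client score. I will fix the reading that the negative set is the complement of the evidence-bearing clients, $\mathcal C_q^-=[M]\setminus\mathcal C_q^+$, so every client is either positive or negative. This identification is needed, and I will state it explicitly. If $\mathcal C_q^-$ were only a sampled subset, the argument could fail, because the negative client that displaces a positive one might not appear in the loss. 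I will also dispose of the trivial case $\mathcal C_q^+=\emptyset$ at the outset.

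\textbf{Step 1 (a missed positive is beaten by a selected negative).} Fix a missed client $i\in\mathcal C_q^+\setminus\widehat{\mathcal C}_q$. Since $i\notin\widehat{\mathcal C}_q$, at most $|\mathcal C_q^+|-1$ positives lie in $\widehat{\mathcal C}_q$. Because $|\widehat{\mathcal C}_q|=B\ge|\mathcal C_q^+|>|\mathcal C_q^+|-1$, the selected set must contain some $j\in\mathcal C_q^-$. The set $\widehat{\mathcal C}_q$ consists of the $B$ highest scores, with ties broken arbitrarily, and $i$ was excluded. Hence $s_j(q)\ge s_i(q)$ for every selected $j$, in particular for this negative $j$.

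\textbf{Step 2 (each miss costs at least $\gamma$).} For the pair $(i,j)$ from Step 1, the hinge term satisfies $\max(0,\gamma-s_i+s_j)\ge\gamma$. Distinct missed positives $i$ give distinct index pairs $(i,j_i)$, even if the same negative $j$ is reused, since the first coordinates differ. All remaining hinge terms are nonnegative. Summing over the missed positives gives
\[
\mathcal L_{\mathrm{route}}(q)\ \ge\ \gamma\,\bigl|\mathcal C_q^+\setminus\widehat{\mathcal C}_q\bigr|,
\]
which is the first claim after dividing by $\gamma>0$.

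\textbf{Step 3 (recall and the zero-loss case).} Client-level recall is defined as $\operatorname{Recall}_{\mathrm{client}}@B=|\mathcal C_q^+\cap\widehat{\mathcal C}_q|/|\mathcal C_q^+|=1-|\mathcal C_q^+\setminus\widehat{\mathcal C}_q|/|\mathcal C_q^+|$. Substituting the bound from Step 2 gives the recall inequality. If $\mathcal L_{\mathrm{route}}(q)=0$, the number of misses is at most $0$, so every evidence-bearing client is selected.

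The main obstacle is not computational. It lies in making the assumptions explicit: that $\mathcal C_q^-$ covers all non-evidence clients, that the loss uses the same scores $s_i^{(T)}$ as the selection rule, and that ties in $\operatorname{TopB}$ are handled by the inequality $s_j\ge s_i$ rather than a strict one. All three points should be stated before Step 1.
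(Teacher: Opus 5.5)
Your proof is correct and follows the paper's argument: each missed evidence-bearing client is outscored by some selected negative client, so its hinge term is at least $\gamma$, and summing over the $m$ missed clients gives $\mathcal L_{\mathrm{route}}(q)\ge m\gamma$. The differences are minor. The paper writes the hinge terms at the prototype level via $r^{(T)}_{j,l_j}(q)=s_j^{(T)}(q)\ge s_i^{(T)}(q)\ge r^{(T)}_{i,k_i}(q)$ and pairs each missed positive with a distinct selected negative, whereas you work directly with the client-level loss as written in the methodology section, correctly note that the pairs $(i,j_i)$ are already distinct through their first coordinate, and make explicit the assumption $\mathcal C_q^-=[M]\setminus\mathcal C_q^+$ that the paper uses tacitly.
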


\begin{proof}
Let $\mathcal M_q=\mathcal C_q^+\setminus\widehat{\mathcal C}_q$
be the set of missed positive clients, with
$|\mathcal M_q|=m$. Since $B\geq|\mathcal C_q^+|$, the selected
set $\widehat{\mathcal C}_q$ contains at least $m$ negative clients.
Thus, each missed positive client $i\in\mathcal M_q$ can be matched
to a distinct selected negative client $j$.

For each missed client $i$, there exists a positive prototype
$(i,k_i)\in\mathcal P_q^+$. Let
$l_j=\arg\max_l r_{j,l}^{(T)}(q)$. Since $j$ is selected while $i$
is not selected, we have
\begin{equation}
r_{j,l_j}^{(T)}(q)
=
s_j^{(T)}(q)
\geq
s_i^{(T)}(q)
\geq
r_{i,k_i}^{(T)}(q).
\end{equation}
Therefore, the corresponding pairwise ranking term is at least
$\gamma$:
\begin{equation}
\left[\gamma-r_{i,k_i}^{(T)}(q)
+r_{j,l_j}^{(T)}(q)\right]_+
\geq \gamma.
\end{equation}
Because the $m$ matched pairs are distinct and all appear in
$\mathcal L_{\mathrm{route}}(q)$, it follows that
$\mathcal L_{\mathrm{route}}(q)\geq m\gamma$. Hence,
$m\leq\mathcal L_{\mathrm{route}}(q)/\gamma$, which proves the
result.
\end{proof}

\begin{table*}[!t]
\centering
\caption{Statistics of the datasets and knowledge graphs (KGs) used for training.}
\label{tab:dataset_statistics}
\small
\renewcommand{\arraystretch}{1.15}
\setlength{\tabcolsep}{6pt}
\begin{tabular}{lcccccc}
\toprule
Dataset & \# Query-Doc Pairs & \# Documents & \# KGs & \# Entities & \# Relations & \# Triples \\
\midrule
HotpotQA & 20,000 & 204,822 & 20 & 1,930,362 & 967,218 & 6,393,342 \\
MuSiQue  & 20,000 & 410,380 & 20 & 1,544,966 & 900,338 & 4,848,715 \\
2Wiki    & 20,000 & 122,108 & 20 & 916,907 & 372,554 & 2,883,006 \\
\midrule
\textbf{Total} & \textbf{60,000} & \textbf{737,310} & \textbf{60} &
\textbf{4,392,235} & \textbf{2,240,110} & \textbf{14,125,063} \\
\bottomrule
\end{tabular}
\end{table*}

\begin{algorithm}[t]
\caption{Initialize Procedure of FedGRAG}
\label{alg:preprocess}
\begin{algorithmic}[1]  
\Require Each graph dataset $\mathcal{G}$; Number of domain $K$; number of client $M$.
\Ensure Global prototype graph $\mathcal{R}$.

\State Initialize model parameters $\mathbf{W}$ for each client;
\State \texttt{/* On each client */}
\For{$i \gets 1$ to $M$}
\State Extract node embedding $\mathbf{H}^\psi$ and $\mathbf{H}^\phi$.
\State Obtain fusion embeddings $\mathbf{H}$.
\State Separate domain knowledge by $K$-means.
\State Generate local prototypes $\mathbf{P}$.
\State Upload local $\mathbf{P}$.
\EndFor

\State \texttt{/* On the server */}
\State Collect prototypes from each client.
\State Construct a global prototype  graph $\mathcal{R}$.

\State \Return $R$
\end{algorithmic}  
\end{algorithm}

\begin{algorithm}[t]
\caption{Query Retrieval and Generation of FedGRAG}
\label{alg:retrieval}
\begin{algorithmic}[1]
\Require Query $q$; global routing graph $\mathcal{R}$; local knowledge graphs
$\{\mathcal{G}_i\}_{i=1}^{M}$; number of selected clients $B$;
number of seed entities $N_s$; expansion depth $H$; edge budget $N_r$
\Ensure Generated answer $\hat{y}$ and composed evidence $\mathcal{E}^{q}$

\Statex \texttt{Server
query routing:}
\State Encode the query as $\mathbf{q} \gets \operatorname{Enc}(q)$.
\State Compute query-prototype similarities on $\mathcal{R}$.
\State Propagate the relevance scores over $\mathcal{R}$.
\State Compute each client score by aggregating its prototype scores
\State Select the top-$B$ clients:
$\mathcal{C}_{q} \gets \operatorname{TopB}\bigl(\{s_i(q)\}_{i=1}^{M}\bigr)$

\Statex \textbf{Local evidence retrieval and abstraction:}
\ForAll{ client $ \in \mathcal{C}_{q}$ \textbf{in parallel}}
    \State Receive query representation $\mathbf{q}$ from the server.
    \State Retrieve the top-$N_s$ seed entities from $\mathcal{G}$.
    \State Expand the seeds within $H$ hops to obtain candidate subgraph $\mathcal{S}^{q}$.
    \State Retain at most $N_r$ query-relevant edges in $\mathcal{S}^{q}$.
    \State Upload $\mathbf{H}^q,\mathbf{A}^q,\mathcal{U}^q$ to the server.
\EndFor

\Statex \textbf{Server evidence composition and generation:}
\ForAll{$i \in \mathcal{C}_{q}$}
    \State Reconstruct the evidence subgraph:
$\hat{\mathcal{G}}_i^q
\gets D_{\phi}(\mathcal{P}_i^q)$;
\State Convert $\hat{\mathcal{G}}_i^q$ into evidence facts $\hat{\mathcal{B}}_i^q$.
\EndFor
\State Aggregate the reconstructed facts:
$\hat{\mathcal{B}}^{q}
\gets \cup_{i\in\mathcal{C}_{q}}\widehat{\mathcal{B}}_{i}^{q}$
\State Construct the evidence interaction graph
$\mathcal{G}_{E}^{q}$ from $\widehat{\mathcal{B}}^{q}$.
\State Traverse $\mathcal{G}_{E}^{q}$ and serialize the selected evidence paths into $\mathcal{E}^{q}$.
\State Generate the answer:
$\hat{y}\gets\operatorname{LLM}\bigl(\operatorname{Prompt}(q,\mathcal{E}^{q})\bigr)$
\State \Return $\hat{y},\mathcal{E}^{q}$.
\end{algorithmic}
\end{algorithm}

\end{document}